\DeclareSymbolFont{rsfs}{U}{rsfs}{m}{n}
\DeclareSymbolFontAlphabet{\mathscrsfs}{rsfs}
\numberwithin{equation}{section}
\newtheoremstyle{myexample} 
    {\topsep}                    
    {\topsep}                    
    {\rm }                   
    {}                           
    {\bf }                   
    {.}                          
    {.5em}                       
    {}  
\newtheoremstyle{myremark} 
    {\topsep}                    
    {\topsep}                    
    {\rm}                        
    {}                           
    {\bf}                        
    {.}                          
    {.5em}                       
    {}  
\newtheorem{theorem}{Theorem}
\theoremstyle{myremark}
\theoremstyle{myremark}
\theoremstyle{myexample}
\def\<{\langle}
\def\>{\rangle}
\def\bb{{\boldsymbol b}}
\def\sign{{\operatorname{\text{sign}}}}
\def\normal{{\sf N}}
\def\cC{{\cal C}}
\def\cS{{\cal S}}
\def\cL{{\cal L}}
\def\K{{\cal K}}
\def\cR{\mathcal{R}}
\newcommand\norm[1]{\lVert{#1}\rVert}
\def\cutp{\mathcal{C}}
\def\metp{\mathcal{M}}
\def\deq{{\ \coloneqq}}
\newcommand{\eq}[1]{\begin{align}#1\end{align}}
\newcommand{\inner}[1]{{\langle #1 \rangle}}
\newcommand{\abs}[1]{\left|#1\right|}
\def\syms{{\mathbb S}}
\def\bb{{\bf b}}
\def\cX{{\cal X}}
\def\mge{{\, \succcurlyeq \,}}
\def\reals{\mathbb{R}}
\def\by{{\boldsymbol y}}
\def\bx{{\boldsymbol x}}
\def\btheta{{\boldsymbol \theta}}
\def\hbx{\hat{\boldsymbol x}}
\def\bOmega{{\boldsymbol \Omega}}
\def\bc{{\boldsymbol c}}
\def\bb{{\boldsymbol b}}
\def\bA{{\boldsymbol A}}
\def\blambda{{\boldsymbol \lambda}}
\def\s{{\boldsymbol \sigma}}
\def\bsigma{{\boldsymbol \sigma}}
\def\Treg[#1]{T^{{\rm reg},#1}}
\def\GW[#1]{{\rm GW}(#1)}
\def\MGW[#1]{{\rm MGW}(#1)}
\def\opt{\text INT}
\def\O{{\mathcal O}}
\def\te{\theta^{{\rm e}}}
\def\tv{\theta^{{\rm v}}}
\def\bte{\theta^{{\rm e}}}
\def\btv{\theta^{{\rm v}}}
\def\SOS{{\rm SOS}}
\def\PSOS{{\rm PSOS}}
\def\M{{\sf M}}
\def\tol{\text{tol}}
\def\Confidence{{\sf Confidence}}
\def\spherec{\bigcirc \!\!\!\!\raisebox{-.1pt}{\scriptsize $i$}}
\title{Inference in Graphical Models \\ via Semidefinite Programming Hierarchies}
\author{
  Murat A.~Erdogdu\\
  Microsoft Research\\
  \!\!\!\!\texttt{erdogdu@cs.toronto.edu} \!\!\!\! \\
  \And
   Yash Deshpande \\
   MIT and Microsoft Research\\
  \texttt{yash@mit.edu} \\
   \And
  Andrea Montanari \\
     Stanford University\\
  \texttt{montanari@stanford.edu} \\
}
\begin{document}

\maketitle

\begin{abstract}
  Maximum A posteriori Probability (MAP) inference in graphical models
  amounts to solving a graph-structured  combinatorial optimization problem.
  Popular inference algorithms such as belief propagation (BP) and
  generalized belief propagation (GBP) are intimately related to
  linear programming (LP) relaxation  within the Sherali-Adams hierarchy.
  Despite the popularity of these algorithms,  it is well understood that
  the Sum-of-Squares (SOS) hierarchy based on
  semidefinite programming (SDP) can provide superior guarantees.
  Unfortunately, SOS relaxations
  for a graph with $n$ vertices require solving an SDP
  with $n^{\Theta(d)}$ variables where $d$ is the degree in the hierarchy.
  In practice, for $d\ge 4$, this approach does not scale beyond a few tens of variables. 
  In this paper, we propose binary SDP relaxations for MAP inference using the SOS hierarchy
  with two innovations focused on computational efficiency. Firstly, in analogy to
  BP and its variants, we only introduce decision variables corresponding to contiguous regions
  in the graphical model. Secondly, we solve the resulting SDP using a non-convex Burer-Monteiro style method,
  and develop a sequential rounding procedure.
  We demonstrate that the resulting algorithm can solve problems
  with tens of thousands of variables within minutes, and outperforms
  BP and GBP on practical problems such as image denoising and Ising spin glasses.
  Finally, for specific graph types, we establish a sufficient condition for the
  tightness of the proposed partial SOS relaxation.
\end{abstract}

\vspace{-.1in}
\section{Introduction}
\vspace{-.1in}
Graphical models provide a powerful framework for analyzing systems
comprised by a large number of interacting variables.
Inference in graphical models is crucial
in scientific methodology with countless applications
in a variety of fields including causal inference, computer vision, statistical physics,
information theory, and genome research
\cite{wainwright2008graphical,koller2009probabilistic,mezard2009information}.

In this paper, we propose a class of inference algorithms
for pairwise undirected graphical models. Such models are
fully specified by assigning:
$(i)$ a finite domain $\cX$ for the variables;
$(ii)$ a finite graph $G=(V,E)$ for $V=[n]\equiv\{1,\dots,n\}$
capturing the interactions of the basic variables;
$(iii)$ a collection of functions 
$\btheta = (\{\theta_i^v\}_{i\in V}, \{\theta_{ij}^e\}_{(i,j)\in E})$
that quantify the vertex potentials and interactions between the variables;
whereby for each vertex $i\in V$ we have $\theta_i^v:\cX\to \reals$
and for each edge
$(i,j)\in E$, we have
$\theta_{ij}^e:\cX\times\cX\to \reals$ (an arbitrary ordering is fixed
on the pair of vertices $\{i,j\}$).
These parameters can be used to form a probability distribution on
$\cX^V$ for the random vector $\bx = (x_1,x_2, ...,x_{n})\in\cX^V$ by letting,
\begin{align}
  p(\bx|\btheta) = \frac{1}{Z(\btheta)}\, e^{U(\bx;\btheta)}\, ,\;\;\;\; \ 
  U(\bx;\btheta) = \sum_{(i,j)\in E}\theta_{ij}^e(x_i,x_j)+\sum_{i\in V}\theta_i^v(x_i)\, ,
\end{align}
where $Z(\btheta)$ is the normalization constant commonly referred to
as the partition function.
While such models can encode a rich class of multivariate
probability distributions, basic inference tasks are intractable except for
very special graph structures such as trees or
small treewidth graphs \cite{cowell2006probabilistic}.
In this paper, we will focus on MAP estimation, which amounts to 
solving the combinatorial optimization problem
\vspace{-.05in}
\begin{align}
  \hbx(\btheta) \equiv \arg \max_{\bx\in\cX^V} U(\bx;\btheta).
\end{align}
Intractability plagues other classes of graphical models as well
(e.g. Bayesian networks, factor graphs), and has motivated 
the development of a wide array of heuristics.
One of the simplest such heuristics is the loopy belief propagation (BP)
\cite{wainwright2008graphical,koller2009probabilistic,mezard2009information}.
In its max-product version (that is well-suited for MAP estimation),
BP is intimately related to the linear programming (LP) relaxation
of the combinatorial problem $\max_{\bx\in\cX^V}U(\bx;\btheta)$. 
Denoting the decision variables by
$\bb = (\{b_i\}_{i\in V}, \{b_{ij}\}_{(i,j)\in E})$,
LP relaxation form of BP can be written as
\begin{align}
  \underset{\bb}{\mbox{maximize}}&\;\;\; \sum_{(i,j)\in E}\sum_{x_i,x_j\in\cX}\theta_{ij}(x_i,x_j)b_{ij}(x_i,x_j)+\sum_{i\in V}\sum_{x_i\in\cX}\theta_i(x_i)b_i(x_i)\, ,\label{eq:LP1}\\
  \mbox{subject to}&\;\;\; \sum_{x_j\in \cX} b_{ij}(x_i,x_j) = b_i(x_i)\;\;\;\;\;\;\;\;\;\;\;\;\;\;
                     \forall (i,j)\in E\, ,\label{eq:LP2}\\
                                        &\;\;\; b_i \in \Delta_{\cX} \;\;\; \forall i \in V, \;\;\; b_{ij}\in\Delta_{\cX\times \cX}\;\;\; \forall (i,j)\in E\ ,\label{eq:LP3}
\end{align}
where $\Delta_S$ denotes the simplex of probability distributions over set $S$. 
The decision variables are referred to as `beliefs',
and their feasible set 
is a relaxation of the polytope of marginals of distributions.
The beliefs  satisfy  the constraints on
marginals involving at most two variables 
connected by an edge.

Loopy belief propagation is successful on some applications,
e.g. sparse locally tree-like graphs that
arise, for instance, decoding modern error correcting codes
\cite{richardson2008modern} or in random constraint
satisfaction problems \cite{mezard2009information}.
However, in more structured instances -- arising for example in computer vision --
BP can be substantially improved by accounting for local dependencies within subsets
of more than two variables.
This is achieved by generalized belief propagation (GBP)
\cite{yedidia2005constructing} where
the decision variables are beliefs  $b_R$
that are defined on subsets of vertices (a `region')  $R \subseteq [n]$,
and that represent the marginal
distributions of the variables in that region.
The basic constraint on the beliefs is
the linear marginalization constraint:
$\sum_{\bx_{R\setminus S}} b_R(\bx_R) = b_S(\bx_S)$,
holding whenever $S\subseteq R$. Hence GBP itself is closely
related to LP relaxation of the polytope of marginals of probability distributions.
The relaxation becomes tighter as larger regions are incorporated.
In a prototypical application, $G$ is a two-dimensional grid, and
regions are squares induced by four contiguous vertices (plaquettes),
see Figure~\ref{fig:grid-sos}, left frame.
Alternatively in the right frame of the same figure,
the regions correspond to triangles.

The LP relaxations that correspond to GBP are closely related to
the Sherali-Adams hierarchy \cite{sherali1990hierarchy}.
Similar to GBP, the variables within this hierarchy are beliefs over
subsets of variables $\bb_R = (b_R(\bx_R))_{\bx_R\in\cX^R}$
which are consistent under
marginalization: $\sum_{\bx_{R\setminus S}} b_R(\bx_R) = b_S(\bx_S)$.
However, these two approaches differ in an important point: Sherali-Adams hierarchy uses 
beliefs over \emph{all subsets} of $|R|\le d$ variables,
where $d$ is the degree in the hierarchy;
this leads to an LP of size $\Theta(n^d)$. In contrast, GBP only retains
regions that are contiguous in $G$. If $G$ has maximum degree $k$,
this produces an LP of size $\O(nk^d)$,
a reduction which is significant for large-scale problems.

Given the broad empirical success of GBP,
it is natural to develop better methods for inference in graphical models
using tighter convex relaxations.
Within combinatorial optimization,
it is well understood that the semidefinite programming (SDP) relaxations 
provide superior approximation guarantees with respect to LP \cite{goemans1995improved}.
Nevertheless, SDP has found limited applications in inference tasks for
graphical models for at least two reasons.
A \emph{structural reason}: 
standard SDP relaxations (e.g. \cite{goemans1995improved})
do not account exactly for correlations
between neighboring vertices in the graph which is
essential for structured graphical models.
As a consequence, BP or GBP often outperforms basic SDPs.
A \emph{computational reason}: basic SDP relaxations involve
$\Theta(n^2)$ decision variables,
and generic interior point solvers do not scale well for the large-scale applications.
An exception is \cite{wainwright2004semidefinite} which employs the simplest SDP 
relaxation
(degree $2$ Sum-Of-Squares, see below) in conjunction with a relaxation of the entropy and interior point methods --
higher order relaxations are briefly discussed without implementation
as the resulting program suffers from the aforementioned limitations.

In this paper, we revisit MAP inference in graphical models via SDPs,
and propose an approach that carries over the favorable performance guarantees of SDPs into
inference tasks. For simplicity,
we focus on models with binary variables, but we believe 
that many of the ideas developed here can be naturally extended to other finite domains.
We present the following contributions:

\noindent{\bf Partial Sum-Of-Squares relaxations.}
  We use SDP hierarchies, specifically the Sum-Of-Squares (SOS)
  hierarchy \cite{shor1987class,lasserre2001explicit,parrilo2003semidefinite}
  to formulate tighter SDP relaxations for binary MAP inference 
  that account exactly for the joint distributions of small subsets 
  of variables $\bx_R$, for $R\subseteq V$.
  However, 
  SOS introduces decision variables for all subsets $R\subseteq V$
  with $|R|\le d/2$ ($d$ is a fixed even integer),
  and hence scales poorly for large-scale inference problems. 
  We propose a similar modification as in GBP.
  Instead of accounting for all subsets $R$ with $|R|\le d/2$,
  we only introduce decision variables to represent 
  a certain family of such subsets (regions) of vertices in $G$. The resulting SDP has
  (for $d$ and the maximum degree of $G$ bounded) only $\O(n^2)$ decision variables
  which is suitable for practical implementations.
  We refer to these relaxations as Partial Sum-Of-Squares (PSOS), cf. Section \ref{sec:PSOS}.

\noindent{\bf Theoretical analysis.}
  In Section \ref{sec:Theoretical}, we prove that suitable $\PSOS$ relaxations
  are tight for certain classes of graphs, including planar graphs, with $\btv=0$.
  While this falls short of explaining the empirical results
  (which uses simpler relaxations, and $\btv\neq 0$), it points in the right direction. 
 
\noindent{\bf Optimization algorithm and rounding.}
  Despite the simplification afforded by PSOS, interior-point solvers still scale poorly to
  large instances. In order to overcome this problem,
  we adopt a non-convex approach proposed by Burer and Monteiro \cite{burer2003nonlinear}.
  We constrain the rank of the SDP matrix in PSOS to be at most $r$, and 
  solve the resulting non-convex problem using a
  trust-region coordinate ascent method, cf. Section \ref{sec:opt-psos}.
  Further, we develop a
  rounding procedure called Confidence Lift and Project (CLAP)
  which iteratively uses PSOS relaxations to obtain an integer solution,
  cf. Section~\ref{sec:opt-clap}.
 
\noindent{\bf Numerical experiments.}
  In Section~\ref{sec:Numerical}, we present numerical experiments
  with PSOS by solving problems of size up to $10,000$ within several minutes.
  While additional work is required to scale
  this approach to massive sizes, we view this as an
  exciting proof-of-concept. To the best of our knowledge,
  no earlier attempt was successful in scaling higher order SOS relaxations 
  beyond tens of dimensions.
  More specifically, we carry out
  experiments with two-dimensional grids -- an image denoising problem,
  and Ising spin glasses.
  We demonstrate through extensive numerical studies that PSOS significantly
  outperforms BP and GBP in the inference tasks we consider.

\begin{figure}
  \centering
  \includegraphics[width=0.4\textwidth]{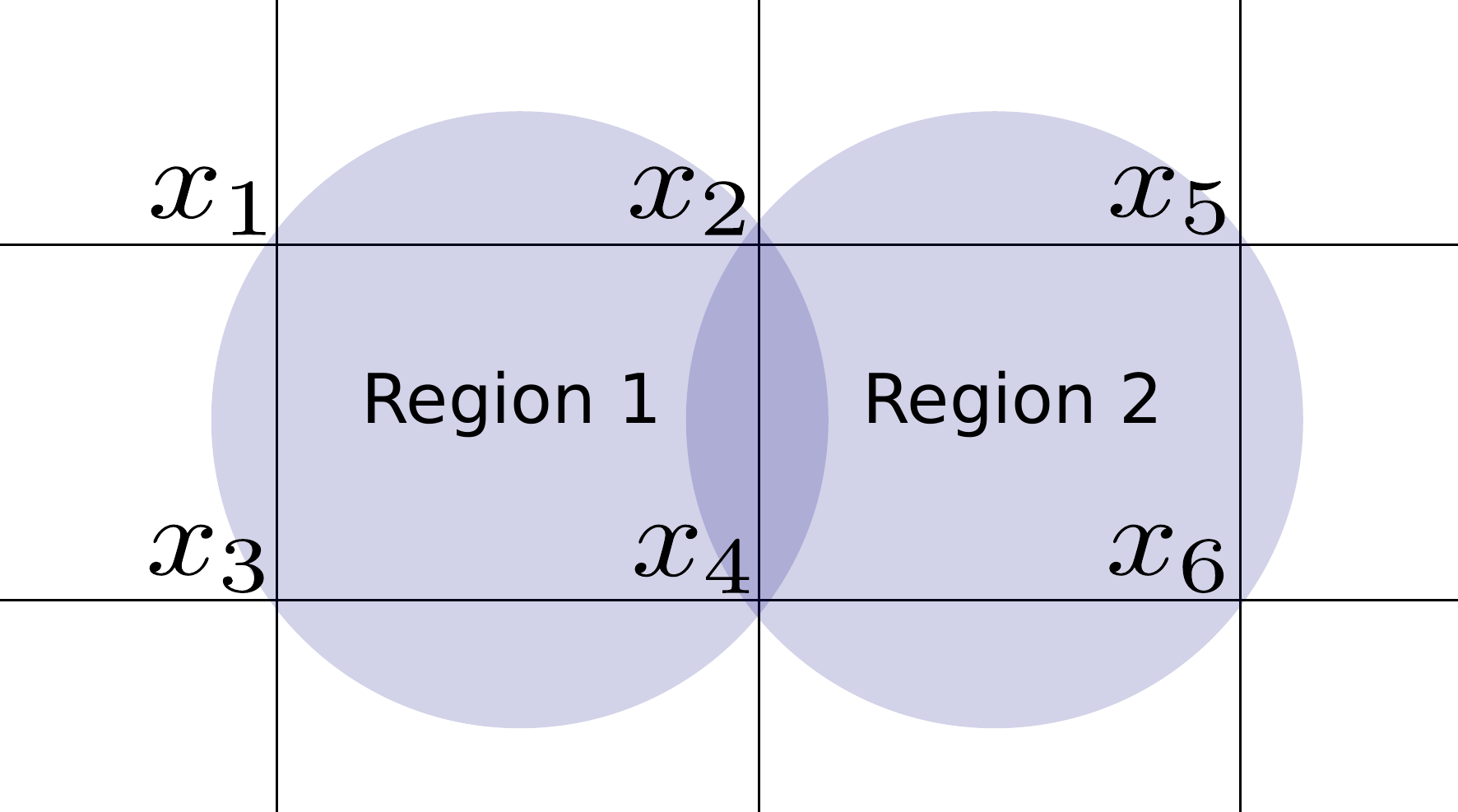}\hspace{0.5cm}
  \includegraphics[width=0.4\textwidth]{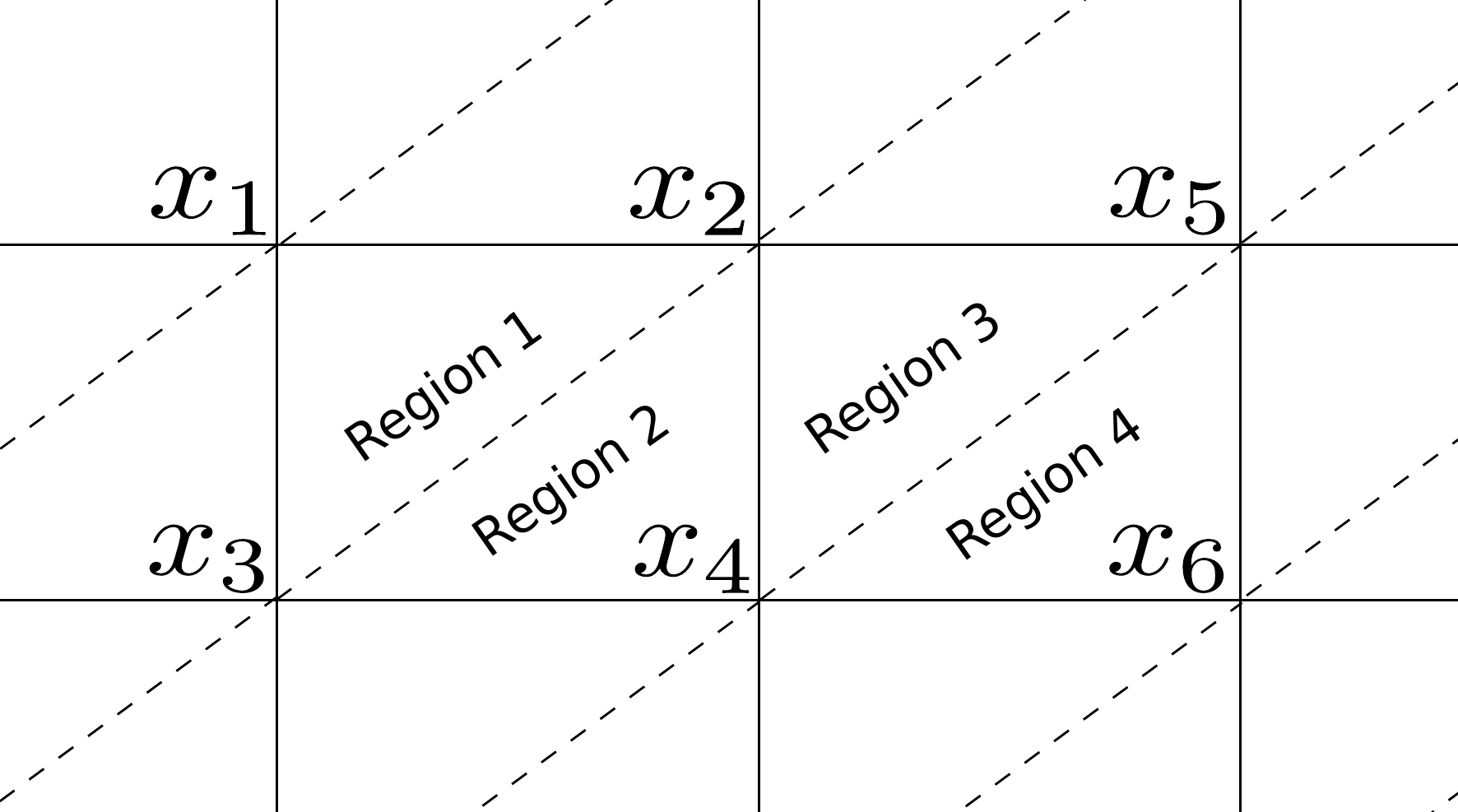}
  \caption{A two dimensional grid, and two typical choices for regions for GBP and PSOS.\\
    Left: Regions are plaquettes comprising four vertices.
    Right: Regions are triangles.}\label{fig:grid-sos}
  \vspace{-.2in}
\end{figure}

\vspace{-.1in}
\section{Partial Sum-Of-Squares Relaxations}
\vspace{-.1in}
\label{sec:PSOS}

For concreteness, throughout the paper we focus on
pairwise models with binary variables. 
We do not expect fundamental problems extending the same approach to other domains.
For binary variables $\bx = (x_1,x_2, ..., x_n)$, 
MAP estimation amounts to solving the following optimization problem
%
\begin{align}\tag{INT}
  \underset{\bx}{\mbox{maximize}}&\;\;\; \sum_{(i,j)\in E}\te_{ij}x_ix_j+\sum_{i\in V}\tv_ix_i\, ,\label{eq:Opt}\\
  \mbox{subject to}&\;\;\; x_i\in\{+1,-1\}\, ,\;\;\;\; \forall i\in V\, ,\nonumber
\end{align}
%
where $\bte = (\te_{ij})_{1\le i,j\le n}$ and $\btv = (\tv_i)_{1\le i\le n}$
are the parameters of the graphical model.

For the reader's convenience, we recall a few basic facts about SOS relaxations,
referring to \cite{barak2017proofs} for further details.
For an even integer $d$, $\SOS(d)$ is an SDP relaxation of \opt\ 
with decision variable
$X:\binom{[n]}{\le d}\to \reals$ where $\binom{[n]}{\le d}$ denotes the set of subsets $S\subseteq [n]$ of size $|S|\le d$; it is given as
\begin{align}\tag{SOS}
  \underset{X }{\mbox{maximize}}&\;\;\;\sum_{(i,j)\in E}\te_{ij}X(\{i,j\})+\sum_{i\in V}\tv_iX(\{i\})\, ,\label{eq:SOS}\\
  \mbox{subject to}&\;\;\; X(\emptyset) =1, \;\;\M(X)\mge 0\, .\nonumber
\end{align}
The moment matrix $\M(X)$ is indexed by sets $S, T\subseteq [n]$, $|S|, |T|\le d/2$,
and has entries  $\M(X)_{S,T}= X(S\triangle T)$ with $\triangle$ denoting
the symmetric difference of two sets. Note that $\M(X)_{S,S}= X(\emptyset) = 1$. 

We can equivalently represent $\M(X)$ as a Gram matrix by
letting $\M(X)_{S,T}= \<\bsigma_{S},\bsigma_T\>$ for a collection of
vectors $\bsigma_S\in\reals^r$ indexed by $S\in\binom{[n]}{\le d/2}$.
The case $r=\big|\binom{[n]}{\le d/2}\big|$ can represent any semidefinite matrix;
however, in what follows it is convenient from a computational perspective
to consider smaller choices of $r$.
The constraint $\M(X)_{S,S}=1$ is equivalent to $\|\bsigma_S\|=1$,
and the condition
$M(X)_{S,T} = X(S\triangle T)$ can be equivalently written as
\begin{align}
  \<\bsigma_{S_1},\bsigma_{T_1}\>= \<\bsigma_{S_2},\bsigma_{T_2}\>\,,\;\;\;\;\;
  \forall S_1\triangle T_1= S_2\triangle T_2.\label{eq:GeneralConstraints}
\end{align}
In the case $d=2$, SOS($2$) recovers the classical
Goemans-Williamson SDP relaxation \cite{goemans1995improved}.
%
%

\begin{figure}[t]
  \includegraphics[width=2.6in]{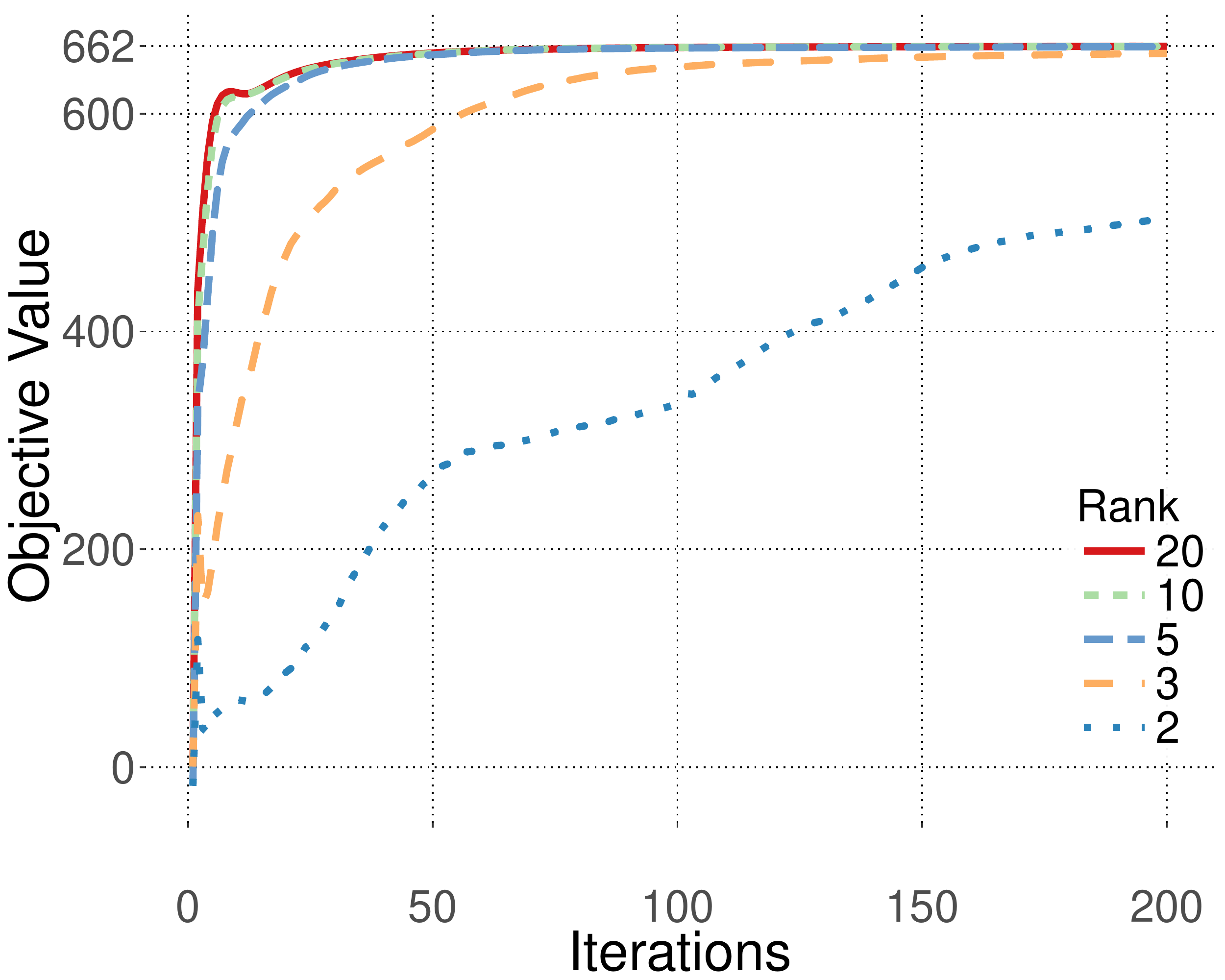}\hfill
  \includegraphics[width=2.6in]{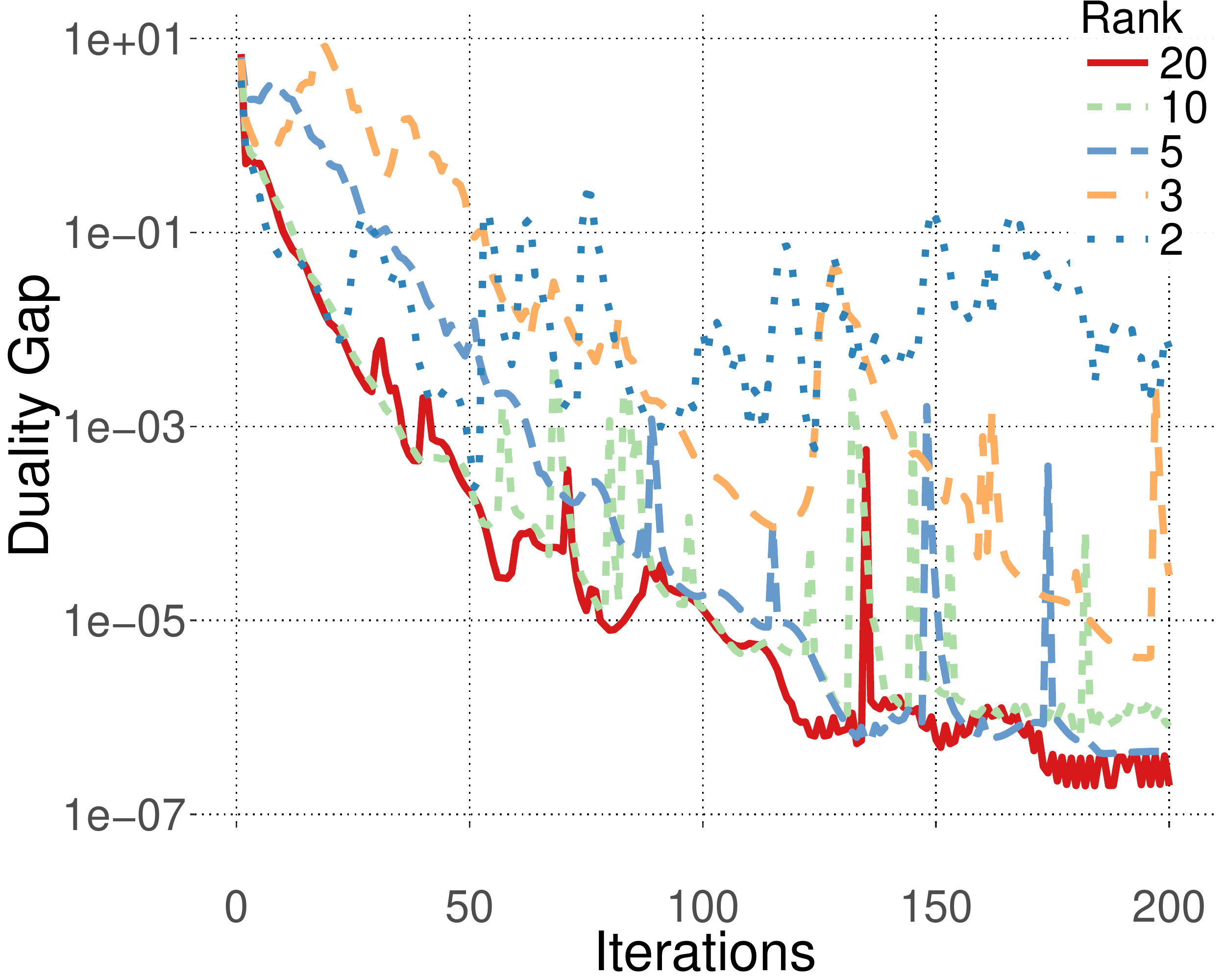}
  \caption{\label{fig:rank-effect}
    Effect of the rank constraint $r$ on $n=400$ square lattice ($20 \times 20$):
    Left plot shows the change in the value of objective at each iteration.
    Right plot shows the duality gap of the Lagrangian.}
  \vspace{-.15in}
\end{figure}
In the following, we consider the simplest higher-order SDP, namely SOS($4$) for which
the general constraints in Eq.~\eqref{eq:GeneralConstraints} can be listed explicitly.
Fixing a region $R\subseteq V$, and defining the Gram vectors
$\bsigma_{\emptyset}, (\bsigma_i)_{i\in V}, (\bsigma_{ij})_{\{i,j\}\subseteq  V}$,
we list the constraints that involve vectors 
$\bsigma_{S}$ for $S\subseteq R$ and $|S|=1, 2$:
\begin{alignat*}{5}
  & \norm{\s_i}=1  \hspace{1.1in}&&\forall i \in  S\cup \{\emptyset\}  \tag{Sphere $\spherec$\ \ },\\
  &\<\s_i, \s_j\> = \<\s_{ij}, \s_\emptyset\>  &&\forall i,j \in S, \tag{Undirected $i - j$} \\
  &\<\s_i, \s_{ij}\> = \<\s_{j}, \s_\emptyset\>  &&\forall i,j \in S,  \tag{Directed $i \rightarrow j$}\\
  &\<\s_{i}, \s_{jk}\> = \<\s_{k}, \s_{ij}\> &&\forall i,j,k \in S, \tag{V-shaped $^{i}_{j}V^k$}\\[-.5em]
  &\<\s_{ij}, \s_{jk}\> = \<\s_{ik}, \s_\emptyset\>  &&\forall i,j,k \in S, \tag{Triangle $\overset{\raisebox{-2pt}{\ $i$}}{_{j}\triangle_k}$}\\
  &\<\s_{ij}, \s_{kl}\> = \<\s_{ik}, \s_{jl}\>  &&\forall i,j,k,l \in   S. \tag{Loop $^{i}_{k}\Box^j_l$}
\end{alignat*}
Given an assignment of the Gram vectors
$\bsigma= (\bsigma_{\emptyset},(\bsigma_i)_{i\in V}, (\bsigma_{ij})_{\{i,j\}\subseteq V})$,
we denote by 
$\bsigma|_R$ its restriction to $R$, namely
$\bsigma|_R=(\bsigma_{\emptyset},(\bsigma_i)_{i\in R}, (\bsigma_{ij})_{\{i,j\}\subseteq R})$.
We denote by $\bOmega(R)$, the set of vectors $\bsigma|_R$
that satisfy the above constraints.
With these notations, the SOS($4$) SDP can be written as
\begin{align}\tag{SOS($4$)}
  \underset{\bsigma}{\mbox{maximize}}&\;\;\; \sum_{(i,j)\in E}\te_{ij}\<\bsigma_i,\bsigma_j\>+\sum_{i\in V}\tv_i\<\bsigma_i,\bsigma_{\emptyset}\>\, ,\label{eq:SOS4}\\
  \mbox{subject to}&\;\;\; \bsigma\in \bOmega(V)\, .\nonumber
\end{align}
A specific Partial SOS (PSOS) relaxation is defined by a collection of regions $\cR =\{R_1,R_2,\dots, R_m\}$, $R_i\subseteq V$. We will require $\cR$ to be a covering,
i.e.  $\cup_{i=1}^m R_i = V$ and for each $(i,j)\in E$ there exists $\ell\in[m]$
such that $\{i,j\}\subseteq R_{\ell}$.
Given such a covering, the PSOS($4$) relaxation is
\begin{empheq}[box=\fbox]{align}
  \tag{PSOS($4$)}
  \underset{\bsigma}{\mbox{maximize}}&\;\;\; \sum_{(i,j)\in E}\te_{ij}\<\bsigma_i,\bsigma_j\>+\sum_{i\in V}\tv_i\<\bsigma_i,\bsigma_{\emptyset}\>\, ,\label{eq:PSOS4}\\
  \mbox{subject to}&\;\;\; \bsigma|_{R_i}\in \bOmega(R_i)\;\;\;\; \forall i\in\{1,2,\dots,m\}\, .\nonumber
\end{empheq}
Notice that variables $\bsigma_{ij}$ only enter the above program
if $\{i,j\}\subseteq R_{\ell}$ for some $\ell$.
As a consequence, the dimension of the above optimization problem
is $\O(r\sum_{\ell=1}^m|R_\ell|^2)$, which is
$\O(nr)$ if the regions have bounded size; this will be the case in our implementation. 
Of course, the specific choice of regions $\cR$ is crucial for
the quality of this relaxation. A natural heuristic is to
choose each region $R_{\ell}$ to be a subset of contiguous vertices in $G$,
which is generally the case for GBP algorithms.

\vspace{-.15in}
\subsection{Tightness guarantees}
\vspace{-.1in}
\label{sec:Theoretical}

Solving exactly \opt\ is NP-hard even if $G$ is a three-dimensional grid
\cite{barahona1982computational}. Therefore, 
we do not expect PSOS($4$) to be tight for general graphs $G$. 
On the other hand,
in our experiments (cf. Section \ref{sec:Numerical}), 
PSOS($4$) systematically achieves the exact maximum of
\opt\ for two-dimensional grids with random edge and vertex parameters
$(\te_{ij})_{(i,j)\in E}$, $(\tv_i)_{i\in V}$.
This finding is quite surprising and calls for a theoretical explanation. 
While full understanding remains an open problem,
we present here partial results in that direction.

Recall that a cycle in $G$ is a sequence of distinct vertices $(i_1,\dots,i_{\ell})$ such that, for each $j\in[\ell]\equiv \{1,2,\dots,\ell\}$, $(i_j,i_{j+1}) \in E$
(where $\ell+1$ is identified with $1$). The cycle is chordless if there is no $j,k\in[\ell]$, with $j-k\neq \pm 1$ $\mod \ell$ such that $(i_j,i_k)\in E$.
  We say that a collection of regions $\cR$ on graph $G$ is \emph{circular} if for each chordless cycle in $G$
  there exists a region in $R \in \cR$ such that  all vertices of the cycle belong to $R$.
We also need the following straightforward notion of contractibility. A \emph{contraction} of $G$ is a new graph obtained
by identifying two vertices connected by an edge in $G$. 
$G$ is \emph{contractible} to $H$ if there exists a sequence of contractions 
transforming $G$ into $H$.

The following theorem  is a direct consequence of a result of Barahona and Mahjoub \cite{barahona1986cut} (see Supplement for a proof).
\begin{theorem}\label{thm::psos-tight}
  Consider the problem {\rm \opt} with $\btv=0$.
  If $G$ is not contractible to $K_5$  (the complete graph over $5$ vertices),
  then {\rm PSOS($4$)} with a circular covering $\cR$ is tight.
\end{theorem}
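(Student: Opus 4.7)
The plan is to reduce the statement to the cut-polytope characterization of Barahona and Mahjoub \cite{barahona1986cut}. With $\btv = 0$, the problem INT is MAX-CUT on $G$: setting $y_{ij} = x_ix_j$ for $(i,j)\in E$, the integer optimum equals $\max_{\by \in \mathrm{CUT}(G)} \sum_{(i,j)\in E} \te_{ij} y_{ij}$ where $\mathrm{CUT}(G) \deq \mathrm{conv}\{(x_ix_j)_{(i,j)\in E}:\bx \in \{\pm 1\}^V\}$. Barahona and Mahjoub proved that whenever $G$ is not contractible to $K_5$, $\mathrm{CUT}(G)$ is cut out exactly by the box inequalities $-1\le y_e\le 1$ and the odd-cycle inequalities
\[
\sum_{e \in F} y_e \;-\; \sum_{e \in E(C)\setminus F} y_e \;\ge\; 2 - |E(C)|,
\]
ranging over chordless cycles $C$ of $G$ and subsets $F\subseteq E(C)$ with $|F|$ odd. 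Since the PSOS$(4)$ objective is linear in $y_{ij} = \<\s_i,\s_j\>$, tightness reduces to showing that the projection of the PSOS$(4)$ feasible set onto these edge coordinates satisfies all of the above inequalities.

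The box inequalities are immediate from $\|\s_i\|=\|\s_j\|=1$. For an odd-cycle inequality on a chordless cycle $C$, the circular-covering hypothesis provides a region $R\in\cR$ with $V(C)\subseteq R$, and PSOS$(4)$ supplies unit Gram vectors $\s_u$ for $u\in R$ and $\s_{uv}$ for every pair $\{u,v\}\subseteq R$, satisfying the Sphere, Undirected, Directed, V-shape, Triangle and Loop identities. The key subclaim is that PSOS$(4)$ certifies the triangle inequality $\<\s_u,\s_v\>+\<\s_v,\s_w\>+\<\s_u,\s_w\>\ge -1$ (together with the three sign-flipped variants) for \emph{every} triple $\{u,v,w\}\subseteq R$, regardless of whether the triple spans a triangle of $G$. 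This follows from the degree-$4$ SOS identity
\[
4\bigl(1 + x_ux_v + x_vx_w + x_ux_w\bigr) \;=\; \bigl((x_u+x_v)(x_v+x_w)\bigr)^2 \qquad \text{(using } x_\ell^2=1\text{)},
\]
together with its three sign variants obtained by negating $x_u$, $x_v$ or $x_w$: interpreted under the degree-$4$ pseudo-expectation carried by the $\s$-variables, the right-hand side is the squared norm of a degree-$2$ polynomial and is therefore nonnegative, while the Undirected and V-shape identities reduce the left-hand pseudo-expectation to $1+\<\s_u,\s_v\>+\<\s_v,\s_w\>+\<\s_u,\s_w\>$.

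The final step is the classical cut-polytope identity stating that, for any triangulation of $C$ in the complete graph on $V(C)$, each odd-cycle inequality is a sum of triangle inequalities chosen so that every chord coordinate cancels with its copy in the neighbouring triangle. Because PSOS$(4)$ furnishes Gram vectors for \emph{all} pairs in $R$, every such triangle inequality is available whether or not the chord is an edge of $G$, and summing them with the appropriate signs yields the desired odd-cycle inequality on $C$. The main obstacle is this last combinatorial bookkeeping step: for each odd $F\subseteq E(C)$ one must specify the sign pattern around the triangulation so that all chord contributions cancel pairwise. This is a clean but case-based argument (equivalently, a demonstration that the odd-cycle inequality lies in the cone generated by triangle inequalities over $V(C)$), and once it is in hand Barahona--Mahjoub closes the argument.
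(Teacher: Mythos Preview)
Your proposal is correct and follows essentially the same route as the paper: reduce to Barahona--Mahjoub, verify the box constraints by Cauchy--Schwarz, derive the four signed triangle inequalities on every triple in a region from a degree-$4$ SOS certificate, and then triangulate each chordless cycle and telescope the chord terms to recover the odd-cycle inequalities. One small correction: the constraint that does the work in your SOS-square computation is the \emph{Triangle} identity $\<\s_{uv},\s_{vw}\>=\<\s_{uw},\s_\emptyset\>$ (together with Undirected and Sphere), not V-shape---the paper makes this explicit by expanding $\|\s_{uv}\pm\s_{vw}\pm\s_{uw}\pm\s_\emptyset\|^2\ge 0$ directly.
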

\vspace{-0.08in}
 The assumption that $\btv = 0$ can be made without loss
of generality (see Supplement for the reduction from the general case). 
Furthermore, \opt\ can be solved in polynomial time if
$G$ is planar, and $\btv=0$ \cite{barahona1982computational}. 
Note however, the reduction from $\btv\neq 0$ to $\btv=0$ can
transform a planar graph to a non-planar graph. 
This theorem implies that (full) SOS($4$) is also tight if $G$
is not contractible to $K_5$. Notice that planar graphs are not contractible
to $K_5$, and we recover 
the fact that \opt\ can be solved in polynomial time if $\btv=0$.
This result falls short of explaining the empirical findings in Section \ref{sec:Numerical}, 
for at least two reasons. Firstly the reduction to $\btv = 0$ 
induces $K_5$ subhomomorphisms for grids.
Second, the collection of regions $\cR$ described in the
previous section does not include all chordless cycles. Theoretically 
understanding the empirical
performance of PSOS(4) as stated remains open. However, 
similar cycle constraints have proved useful in analyzing LP relaxations \cite{weller2016tightness}. 

\vspace{-.12in}
\section{Optimization Algorithm and Rounding}
\label{sec:opt}
\vspace{-.08in}
\subsection{Solving PSOS($4$) via Trust-Region Coordinate Ascent}
\vspace{-0.05in}
\label{sec:opt-psos}
\setlength{\textfloatsep}{5pt}

\renewcommand{\sfdefault}{cmss}

\begin{algorithm}[t]
  \DontPrintSemicolon
  \SetKwFunction{parsos}{Partial-SOS}
  \SetKwData{Reliables}{Reliables}
  \SetKwData{Actives}{Actives}
  \SetKwInOut{Input}{Input}\SetKwInOut{Output}{Input}
  
   \Input{$G=(V,E)$, $\bte\in \reals^{n \times n}$, $\btv \in \reals^n$,
    $\bsigma \in \reals^{r \times (1+|V|+|E|)}$,
    $\Reliables=\emptyset$ }
  
  {
     $\Actives = V \cup E \setminus \Reliables$, and $\Delta\! = \!1$,
    
     \While{$\Delta >$ \tol}{
       $\Delta = 0$
      
       \For{$s\in \Actives $}{\label{algst::choose-var}
         \uIf(\tcc*[f]{$s\in V$ is a vertex}){$s\in V$}{
          \label{algst::linear-term-v}
          $\bc_s = \sum_{t \in \partial s}\te_{st} \bsigma_t + \tv_s\bsigma_{\emptyset}$
        }
         \uElse(\tcc*[f]{$s = (s_1, s_2) \in E$ is an edge}){
          \label{algst::linear-term-e}
          $\bc_s = \te_{s_1s_2} \s_\emptyset + \tv_{s_1}\s_{s_2} + \tv_{s_2}\s_{s_1}$
        }

        \vspace{.05in}
        
        Form matrix $\bA_s$, vector $\bb_s$,
        and the corresponding Lagrange multipliers $\blambda_s$ (see text).
        
          \vspace{.05in}
        
        $\s^{\rm new}_s \longleftarrow \underset{\norm{\s}=1}{\arg\max}\left\{ \<\bc_s, \s\>+
          \frac{\rho}{2}\norm{\bA_s\s - \bb_s + \blambda_s}^2\right\}$\tcc*[f]{sub-problem} \label{algst::trust-region-step}

        $\Delta \longleftarrow \Delta + \norm{\s^{\rm new}_s - \s_s}^2 + \norm{\bA_s\s_s - \bb_s}^2$
        
         $\s_s\longleftarrow \s^{\rm new}_s$ \tcc*[f]{update variables}
        
         $\blambda_s \longleftarrow \blambda_s + \bA_s\s_s - \bb_s$
        
      }
    }
  }
  \cprotect\caption{ \parsos \label{alg::parsos}\vspace{-0.1in}}
\end{algorithm}
\renewcommand{\rmdefault}{ptm}
\renewcommand{\sfdefault}{phv}
We will approximately solve PSOS($4$) while keeping $r=\O(1)$.
Earlier work implies that (under suitable genericity 
condition on the SDP) there exists an optimal solution with rank $\sqrt{2 \text{ \# constraints}}$ \cite{pataki1998rank}. 
Recent work \cite{boumal2016non}
shows that for $r>\sqrt{2 \text{ \# constraints}}$, the non-convex optimization problem has 
no non-global local maxima.
For SOS($2$), \cite{mei2017solving} proves that 
setting $r=\O(1)$ is sufficient for achieving $\O(1/r)$ relative error from
the global maximum for specific choices of potentials $\bte,\btv$. We find that there is
little or no improvement beyond $r=10$ (cf. Figure \ref{fig:rank-effect}).

We will assume that $\cR = (R_1,\dots,R_m)$ is a covering of $G$ (in the sense introduced in the previous section), and --without loss of generality--
we will assume that the edge set is
\begin{align}
  E= \big\{(i,j)\in V\times V:\;\;\; \exists \ell\in [m] \;\;\;\mbox{such that} \;\;\;\{i,j\}\subseteq R_{\ell}\big\} \, .\label{eq:ConditionCov}
\end{align}
In other words, $E$ is the maximal set of edges that is compatible with $\cR$ being a covering. This can always  be achieved by adding new edges 
$(i,j)$ to the original edge set with $\te_{ij} =0$. 
Hence, the decision variables $\bsigma_s$ are indexed by
$s \in \cS = \{\emptyset\}\cup V\cup E$.
Apart from the norm constraints, all other consistency constraints
take the form $\<\bsigma_{s},\bsigma_{r}\> = \<\bsigma_{t},\bsigma_{p}\>$
for some 4-tuple of indices $(s,r,t,p)$. We denote the set of all such 4-tuples
by $\cC$, and construct the augmented Lagrangian of PSOS($4$) as
\begin{align*}
  \cL(\bsigma,\blambda) = &\sum_{i\in V}\tv_i\<\bsigma_i,\bsigma_{\emptyset}\>
                            +\!\!\sum_{(i,j)\in E}\te_{ij}\<\bsigma_i,\bsigma_j\>
                            +\frac{\rho}{2}\!\!
                            \sum_{(s,r,t,p)\in \cC} \Big(\<\bsigma_{s},\bsigma_{r}\> - \<\bsigma_{t},\bsigma_{p}\>+\lambda_{s,r,t,p}\Big)^2\, . 
\end{align*}
At each step, our algorithm execute two operations:
$(i)$ maximize the cost function with respect to one of the vectors
$\bsigma_s$; $(ii)$ perform one step
of gradient descent with respect to the corresponding subset of Lagrangian parameters,
to be denoted by $\blambda_s$.
%
More precisely, fixing $s\in \cS\setminus \{\emptyset\}$ (by rotational invariance,
it is not necessary to update $\bsigma_{\emptyset}$), we note
that  $\bsigma_s$ appears in the constraints linearly (or it does not appear).
Hence, we can write these constraints in the form $\bA_s\bsigma_s = \bb_s$
where $\bA_s, \bb_s$ depend on $(\bsigma_r)_{r\neq s}$ but not on $\bsigma_s$.
We stack the corresponding Lagrangian parameters in a vector $\blambda_s$;
therefore the Lagrangian term involving $\bsigma_s$ reads
$(\rho/2) \|\bA_s\bsigma_s-\bb_s+\blambda_s\|^2$. On the other hand, 
the graphical model contribution is that the first two terms in
$\cL(\bsigma,\blambda)$ are linear in
$\bsigma_s$, and hence they can be written as $\<\bc_s,\bsigma_s\>$. Summarizing, we have
\begin{align}
  \cL(\bsigma,\blambda) = &\<\bc_s,\bsigma_s\>+\|\bA_s\bsigma_s-\bb_s+\blambda_s\|^2+
                            \widetilde{\cL}\big((\bsigma_r)_{r\neq s},\blambda\big) \, .\label{eq:BlockLagrangian}
\end{align}
It is straightforward to compute $\bA_s$, $\bb_s, \bc_s$; in particular, 
for $(s,r,t,p)\in \cC$, the rows of $\bA_s$ and $\bb_s$ are indexed by $r$
such that the vectors
$\bsigma_{r}$ form the rows of $\bA_s$, and
$\<\bsigma_{t},\bsigma_{p}\>$ form the corresponding entry of $\bb_s$.
Further, if $s$ is a vertex and $\partial s$ are its neighbors, 
we set $\bc_s = \sum_{t\in \partial s}\bte_{st} \s_t + \btv_s\s_{\emptyset}$ 
while if $s=(s_1,s_2)$ is an edge, we set $\bc_{s}= \bte_{s_1s_2} \s_\emptyset + \btv_{s_1}\s_{s_2}+\btv_{s_2}\s_{s_1}$. 
Note that we are using the equivalent representations
$\<\bsigma_i,\bsigma_j\>=\<\bsigma_{ij},\bsigma_{\emptyset}\>$, $\<\bsigma_{ij},\bsigma_j\>=\<\bsigma_{i},\bsigma_{\emptyset}\>$, and
$\<\bsigma_{ij},\bsigma_i\>=\<\bsigma_{j},\bsigma_{\emptyset}\>$.

Finally, we maximize Eq.~\eqref{eq:BlockLagrangian} with
respect to $\bsigma_s$ by a Mor{\'e}-Sorenson style method \cite{more1983computing}
(see for example \cite{erdogdu2015convergence} for potential improvements involving subsampling techniques).

\vspace{-.1in}
\subsection{Rounding via Confidence Lift and Project}
\vspace{-.1in}
\label{sec:opt-clap}

\renewcommand{\sfdefault}{cmss}

\begin{algorithm}[t]
  \DontPrintSemicolon
  \SetKwFunction{parsos}{Partial-SOS}
  \SetKwData{Reliables}{Reliables}
  \SetKwData{Promotions}{Promotions}
  \SetKwData{Confidence}{Confidence}
  \SetKwData{Actives}{Actives}
  \SetKwInOut{Input}{Input}%
  
   \Input{$G=(V,E)$, $\bte \in \reals^{n \times n}$, $\btv \in \reals^n$,
    regions $\cR=\{R_1,...,R_m\}$}

  {\vspace{.03in}
    
    Initialize variable matrix $\bsigma \in \reals^{r \times (1+|V|+|E|)}$ and set \Reliables $= \emptyset$.
    
      \While{ \Reliables\  $ \neq V \cup E$}{

        Run \parsos on inputs $G = (V,E)$, $\bte$, $\btv$, $\bsigma$, $\Reliables$
      \tcc*[f]{lift procedure} \label{algst::lift}
      
        \Promotions = $\emptyset$ and $\Confidence = 0.9$
      
        \While{$\Confidence > 0$ {\bf and} \Promotions $\neq\emptyset$}{
        \label{algst::proj}

          \For(\tcc*[f]{find promotions}){$s \in V \cup E \setminus \Reliables$}{

            \uIf{$|\<\s_\emptyset, \s_{s}\>| >\Confidence $ }{
             $\s_{s} = \text{sign}(\<\s_\emptyset, \s_{s}\>) \cdot \s_\emptyset$
            \tcc*[f]{project procedure}
            
             \Promotions $\longleftarrow$ \Promotions $\cup\  \{s_c\}$
          }
        }
          \uIf(\tcc*[f]{decrease confidence level}){\Promotions $=\emptyset$}{ 
           $\Confidence \longleftarrow \Confidence - 0.1 $
        }
          \Reliables $\longleftarrow$ \Reliables $\cup$ \Promotions
        \tcc*[f]{update \Reliables}
      }      
    }
  }
  \SetKwInOut{Output}{Output}
    \Output{$(\<\bsigma_i, \bsigma_\emptyset\>)_{i\in V} \in \{-1, +1 \}^{n}$}
    \cprotect\caption{\verb|CLAP: Confidence Lift And Project|  \label{alg::lift-proj}}
\end{algorithm}
\renewcommand{\rmdefault}{ptm}
\renewcommand{\sfdefault}{phv}
After Algorithm \ref{alg::parsos} generates an approximate optimizer $\bsigma$ for PSOS($4$),
we reduce its rank to produce a solution of the original combinatorial
optimization problem \opt. To this end, we interpret $\<\bsigma_i,\bsigma_{\emptyset}\>$
as our belief about the value of $x_i$ in  
the optimal solution of \opt, and $\<\bsigma_{ij},\bsigma_{\emptyset}\>$
as our belief about the value of $x_ix_j$.
This intuition can be formalized using the notion of
pseudo-probability \cite{barak2017proofs}. We then recursively 
round the variables about which we have strong beliefs;
we fix rounded variables in the next iteration, and
solve the induced PSOS($4$) on the remaining ones.

\vspace{-.03in}
More precisely, we set a confidence threshold $\Confidence$.
For any variable $\bsigma_s$ such that
$|\<\bsigma_s,\bsigma_{\emptyset}\>|>\Confidence$, we let $x_s = \sign(\<\bsigma_s,\bsigma_{\emptyset}\>)$ and
fix $\bsigma_s = x_s\, \bsigma_{\emptyset}$.
These variables $\bsigma_s$ are no longer updated, and instead 
the reduced SDP is solved. If no variable satisfies the confidence condition, the threshold is reduced until variables are found that satisfy it.
After the first iteration,
most variables yield strong beliefs and are fixed;
hence the consequent iterations have fewer variables and are faster.
\vspace{-.1in}
\section{Numerical Experiments}
\label{sec:Numerical}
\vspace{-.1in}
In this section, we validate the performance of the Partial SOS relaxation and
the CLAP rounding scheme on models defined on two-dimensional grids.
Grid-like graphical models are common in a variety of fields
such as computer vision \cite{sun2003stereo},
and statistical physics \cite{mezard2009information}.
In Section~\ref{sec:Denoising}, we study an image denoising example
and in Section~\ref{sec:SpinGlass}
we consider the Ising spin glass -- a model in statistical mechanics
that has been used as a benchmark for inference in graphical models.

\vspace{-.03in}
Our main objective is to demonstrate that Partial SOS can be used successfully
on large-scale graphical models, and is competitive with the following popular
inference methods: 
\vspace{-.08in}
\begin{itemize}[noitemsep, leftmargin=9pt]
\item \textbf{Belief Propagation - Sum Product (BP-SP)}:
  Pearl's belief propagation computes exact
  marginal distributions on trees \cite{pearl1986fusion}.
  Given a graph structured objective function $U(\bx)$,
  we apply BP-SP to the Gibbs-Boltzmann distribution
  $p(\bx) = \exp\{U(\bx)\}/Z$ using the standard sum-product update rules
  with an inertia of $0.5$ to help convergence \cite{yedidia2005constructing}, 
  and threshold the marginals at $0.5$.
\item \textbf{Belief Propagation - Max Product (BP-MP)}:
  By replacing the marginal probabilities in the sum-product updates with max-marginals,
  we obtain BP-MP, which can be used for exact inference on trees 
  \cite{mezard2009information}. For general graphs,
  BP-MP is closely related to an LP relaxation of the combinatorial problem \opt\ 
  \cite{yedidia2005constructing,weiss2001optimality}. 
   Similar to BP-SP, we use an inertia of 0.5. 
   Note that the Max-Product updates can be equivalently written as
   Min-Sum updates \cite{mezard2009information}.
\item \textbf{Generalized Belief Propagation (GBP)}:
  The decision variables in GBP are beliefs (joint probability distributions)
  over larger subsets of variables in the graph $G$, and they
  are updated in a message passing fashion
  \cite{yedidia2000generalized,yedidia2005constructing}.
  We use plaquettes in the grid (contiguous groups of
  four vertices) as the largest regions,
  and apply message passing with inertia $0.1$ 
  \cite{weiss2001optimality}.
\item \textbf{Partial SOS - Degree 2 (PSOS($2$))}:
  By defining regions as single vertices
  and enforcing only the sphere constraints,
  we recover the classical Goemans-Williamson SDP relaxation \cite{goemans1995improved}.
  Non-convex Burer-Monteiro approach 
  is extremely efficient in this case \cite{burer2003nonlinear}.
  We round the SDP solution by $\hat{x}_i = \sign(\<\s_i,\s_\emptyset\>)$
  which is closely related to the classical approach of \cite{goemans1995improved}.

\item \textbf{Partial SOS - Degree 4 (PSOS($4$))}:
  This is the algorithm developed in the present paper.
  We take the regions $R_{\ell}$ to be triangles,
  cf. Figure~\ref{fig:grid-sos}, right frame.
  In an $\sqrt{n}\times \sqrt{n}$ grid, we have $2(\sqrt{n} - 1)^2$ such regions
  resulting in $\O(n)$ constraints.
  In Figures~\ref{fig::monalisa} and \ref{fig::box-plot},
  PSOS($4$) refers to the CLAP rounding scheme applied together with
  PSOS($4$) in the lift procedure.
\end{itemize}

\vspace{-.15in}
\subsection{Image Denoising via Markov Random Fields}
\vspace{-.1in}
\label{sec:Denoising}

\begin{figure}[t]
  \centering
  \includegraphics[width=\linewidth]{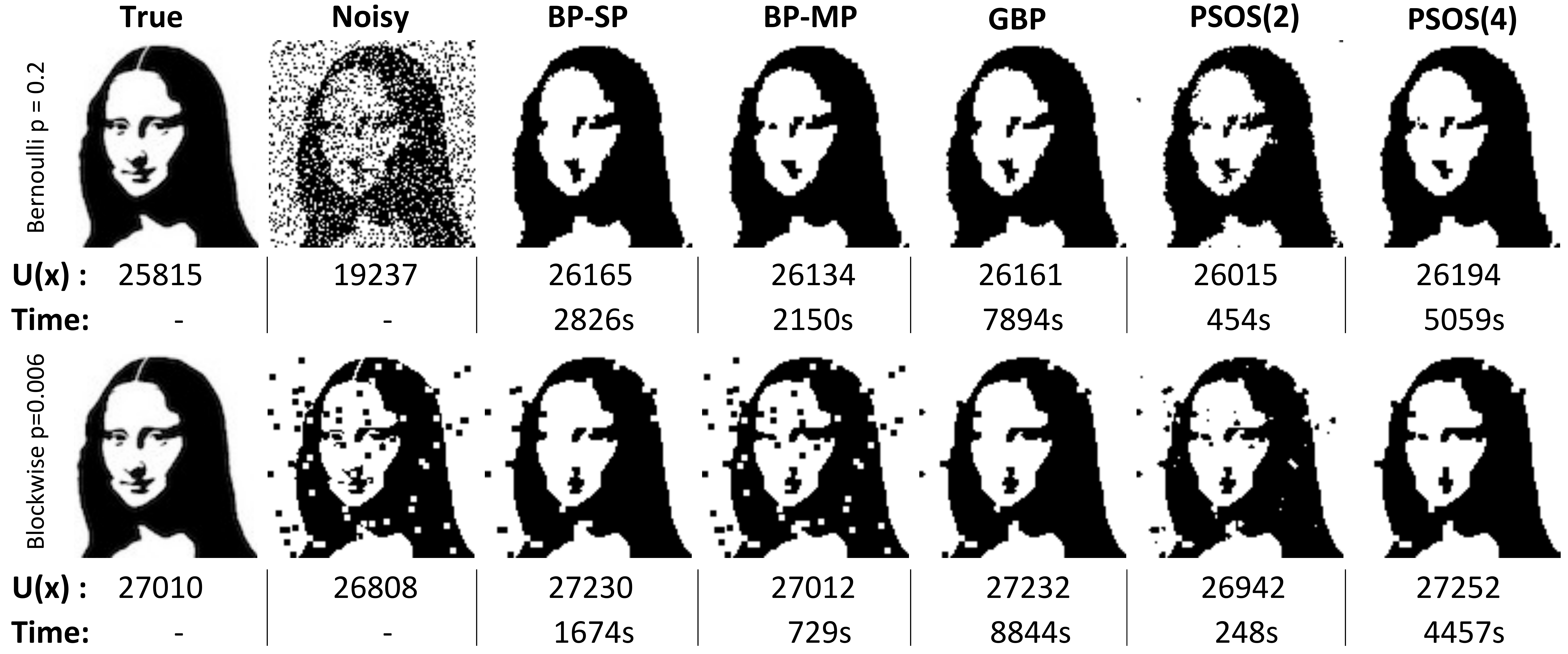}
  \caption{Denoising a binary image by maximizing
    the objective function Eq.~(\ref{eq:ObjDenoise}). 
    Top row: i.i.d. Bernoulli error with flip probability $p=0.2$ with $\theta_0=1.26$.
    Bottom row: blockwise noise where each pixel is the center of a
    $3\times 3$ error block independently with
    probability $p=0.006$ and $\theta_0=1$.\label{fig::monalisa}}
  \vspace{-0in}
\end{figure}

Given a $\sqrt{n}\times \sqrt{n}$ binary image $\bx_0\in\{+1,-1\}^n$, we generate a corrupted
version of the same image $\by\in\{+1,-1\}^n$. We then try to denoise $\by$ by maximizing the following 
objective function:
\vspace{-.03in}
\begin{align}
  U(\bx) = \sum_{(i,j)\in E} x_ix_j+\theta_0\sum_{i\in V} y_ix_i\, , \label{eq:ObjDenoise}\vspace{-.1in}
\end{align}
where the graph $G$ is the $\sqrt{n}\times \sqrt{n}$ grid, i.e.,
$V=\{i=(i_1,i_2):\;\; i_1,i_2\in\{1,\dots,\sqrt{n}\}\}$
and $E = \{(i,j):\;\; \|i-j\|_1=1\}$.
In applying Algorithm \ref{alg::parsos},
we add diagonals to the grid (see right plot in Figure~\ref{fig:grid-sos})
in order to satisfy the condition 
(\ref{eq:ConditionCov}) with corresponding weight $\te_{ij}=0$.

In Figure~\ref{fig::monalisa}, we report the output of various algorithms for a $100\times 100$ binary image.  
We are not aware of any earlier implementation of SOS($4$) beyond tens of variables,
while PSOS($4$) is applied here to $n=10,000$ variables.
Running times for CLAP rounding scheme (which requires several runs of PSOS($4$)) are of order an hour,
and are reported in Figure~\ref{fig::monalisa}.
We consider two noise models: i.i.d. Bernoulli noise and blockwise noise.
The model parameter $\theta_0$ 
is chosen in each case as to approximately optimize
the performances under BP denoising.
In these (as well as in 4 other experiments of the same type reported in the supplement),
PSOS($4$) gives consistently the best reconstruction 
(often tied with GBP), in reasonable time.
Also, it consistently achieves the largest value of
the objective function among all algorithms.

\vspace{-.07in}
\subsection{Ising Spin Glass}
\label{sec:SpinGlass}
\vspace{-.05in}
\begin{figure}[t]
  \centering
  \includegraphics[width=5.5in]{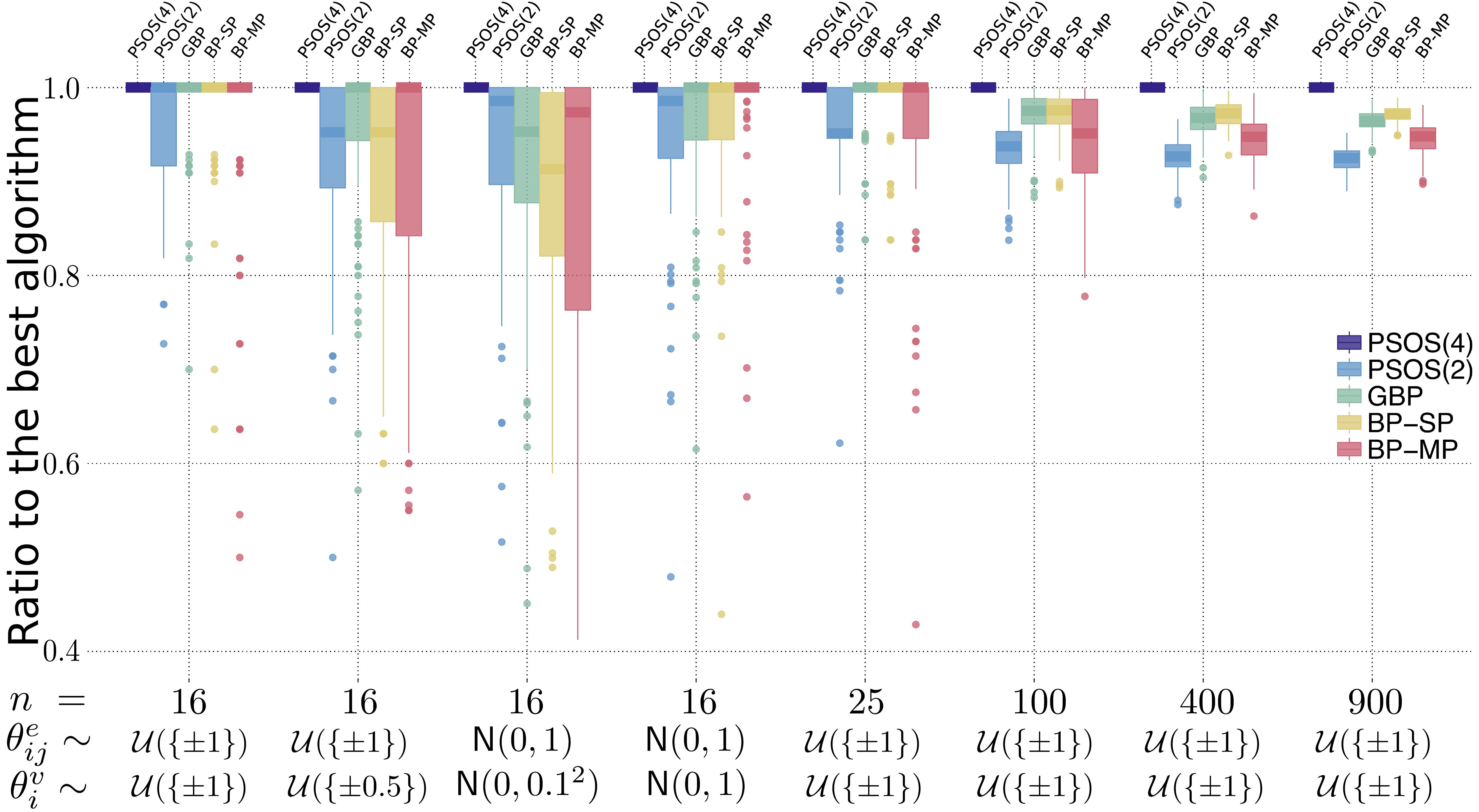}
  \caption{Solving the MAP inference problem \opt\ 
    for Ising spin glasses on two-dimensional grids.
    $\mathcal{U}$ and $\normal$ represent uniform and normal distributions.
    Each bar contains 100 independent realizations.
    We plot the ratio between the objective value achieved by that 
    algorithm and the exact optimum for $n\in\{16,25\}$,
    or the best value achieved by any of the 5 algorithms
    for $n\in \{100,400, 900\}$.\label{fig::box-plot}}
  \vspace{-.0in}
\end{figure}

The Ising spin glass (also known as Edwards-Anderson model \cite{edwards1975theory})
is one of the most studied models in statistical physics. 
It is given by an objective function of the form \opt\  with $G$ a $d$-dimensional
grid, and i.i.d. parameters  $\{\te_{ij}\}_{(i,j)\in E}$, $\{\tv_i\}_{i\in V}$.  
Following earlier work \cite{yedidia2005constructing},
we use Ising spin glasses as a testing ground for our algorithm.
Denoting the uniform and normal distributions by $\mathcal{U}$ and $\normal$ respectively,
we consider two-dimensional grids (i.e. $d=2$), and the following 
parameter distributions:
$(i)$ $\te_{ij} \sim \mathcal{U}(\{+1,-1\})$ and  $\tv_i\sim\mathcal{U}(\{+1,-1\})$,
$(ii)$ $\te_{ij} \sim \mathcal{U}(\{+1,-1\})$ and $\tv_i \sim \mathcal{U}(\{+1/2,-1/2\})$,
$(iii)$ $\te_{ij} \sim \normal(0, 1)$ and  $\tv_i \sim \normal(0, \sigma^2)$ with $\sigma=0.1$
(this is the setting considered in \cite{yedidia2005constructing}), and
$(iv)$ $\te_{ij} \sim \normal(0, 1)$ and  $\tv_i \sim \normal(0, \sigma^2)$ with $\sigma=1$.
For each of these settings, we considered grids of size $n\in \{ 16, 25, 100, 400, 900\}$.

In Figure~\ref{fig::box-plot}, we report the results of 8 experiments as a box plot.
We ran the five inference algorithms described above
on 100 realizations;
a total of 800 experiments are reported in Figure~\ref{fig::box-plot}.
For each of the realizations, we record the ratio of
the achieved value of an algorithm to the exact maximum (for $n\in\{16,25\}$),
or to the best value achieved among these algorithms (for $n\in \{100,400, 900\}$).
This is because for lattices of size 16 and 25,
we are able to run an exhaustive search
to determine the true maximizer of the integer program.
Further details are reported in the supplement.

\emph{In every single instance of 800 experiments,
  \emph{PSOS($4$)} achieved the largest objective value, and
  whenever this could be verified by exhaustive search
  (i.e. for $n\in\{16,25\}$) it achieved an exact maximizer of the integer program.}

%

\setlength{\bibsep}{2pt plus 2.7ex}
\bibliographystyle{alpha}
{\small
  \bibliography{bib}
}

\appendix

\section{Proof of Theorem~\ref{thm::psos-tight}}
Given the graph $G=(V,E)$, and parameters $\bte, \btv$,
we can construct a new graph by adding the extra vertex $\emptyset$, together with edges $\{(i,\emptyset):\, i\in V\}$ connecting it to all previous vertices, and edge
parameters $\te_{i,\emptyset} = \tv_{i}$ (while setting to $0$ the vertex parameters).
Therefore, one can always eliminate the linear term and work with the quadratic form.

We define the \emph{cut polytope} as
\eq{
  \cutp &\deq \text{Conv}\!\left( \left\{xx^T : x_i^2 = 1 \ \forall i \in V\right\} \right),
}
which is a convex hull of $2^n$ rank-1 matrices.
Introducing the interaction variables $X_{ij} = x_ix_j$,
the original optimization problem can be written without the linear term as
\eq{
  &\underset{X\in \reals^{n \times n}}{\mbox{maximize}}\
  \sum_{(i,j) \in E} W_{ij}X_{ij}\\
  &\text{subject to: }\ X \in \cutp. \nonumber
}

For an edge $e = (i,j)$, denote by $X_e$ the entry $X_{ij}$, and
for an edge set $F \subset E$ let $X(F)$ be the summation of entries $X_{ij}$
for which $(i, j) \in F$, i.e. $X(F) = \sum_{e\in F}X_e$.
Further, define the \emph{metric polytope} as
\eq{\label{eq::cyclic-ineq}
  \metp &\deq \{M \in \syms^{n} : |M_{e}| \leq  1 \ \forall e \in E, \\ 
  &\hspace{.9in} M(F) - M(C\setminus F) \geq 2 - |C| \text{ for }
  F \subset C, |F| \text{ is odd}, C \text{ is a simple cycle} \}.\nonumber
}
The inequalities that define the metric polytope are called \emph{cyclic inequalities}.
We recall the following result of Barahona and Mahjoub.
\begin{theorem}[Barahona and Mahjoub \cite{barahona1986cut}]
  \label{thm::barahona}
  $G = (V, E)$ is not contractible to $\K_5$ if and only if
  the cut polytope $\cutp$ is defined by the metric polytope $\metp$.
\end{theorem}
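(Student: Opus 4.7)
The plan is to prove both directions of the equivalence by combining an explicit separating inequality on $K_5$ with Wagner's structural decomposition of $K_5$-minor-free graphs.

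First, I would record the trivial inclusion $\cutp(G) \subseteq \metp(G)$ that holds for every $G$: if $x \in \{\pm 1\}^V$ and $X_{ij} = x_i x_j$, then for any simple cycle $C$ the set $C \cap \delta(\{i : x_i = 1\})$ has even cardinality, which precisely forbids $F = C \cap \delta(S)$ from having odd size and is equivalent to $X(F) - X(C \setminus F) \geq 2 - |C|$ for every $F \subseteq C$ of odd cardinality. Taking convex hulls gives the containment, so the real content of the theorem is to establish equality in the reverse direction under the hypothesis of no $K_5$ minor, and to exhibit a separating point when $K_5$ is a minor.

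For the ``only if'' direction, I would produce an explicit fractional vertex of $\metp(K_5)$ that lies outside $\cutp(K_5)$. A standard witness comes from the hypermetric (pentagonal) inequality on five vertices, which is facet-defining for $\cutp(K_5)$ but not implied by cyclic inequalities; a point satisfying all cycle inequalities while violating the pentagonal facet is verified by direct computation. I would then lift the separation to any graph $G$ containing $K_5$ as a minor by using the naturality of both polytopes under edge deletion and contraction: a minor $H$ of $G$ induces a linear projection on both the $\cutp$ and $\metp$ sides, so a separating point on $K_5$ pulls back to one on $G$.

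For the ``if'' direction, I would appeal to Wagner's theorem: every $K_5$-minor-free graph arises from planar graphs and the Wagner graph $V_8$ by iterated $k$-sums along cliques of size $k \leq 3$. This reduces the proof to two ingredients. (i) The base cases: for planar $G$, the equality $\cutp(G) = \metp(G)$ follows from planar duality, which exchanges cuts of $G$ with $T$-joins of the dual $G^*$, combined with Edmonds' polyhedral description of the $T$-join polytope, as shown in Barahona's earlier work; the $V_8$ case can be verified by direct polyhedral computation. (ii) The inductive step: if $G = G_1 \cup_K G_2$ with $K$ a common clique of size $\leq 3$, then restriction sends a point in $\metp(G)$ to a pair in $\metp(G_1) \times \metp(G_2)$, whence the inductive hypothesis yields convex decompositions into cuts on each side, and these can be glued into cuts on $G$ provided the marginal distributions on $K$ agree.

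The main obstacle is the gluing step for $3$-sums: forcing the two convex decompositions to use a common distribution on the $2^{|K|-1}$ cut-patterns of the shared clique requires a coupling or rearrangement argument, resting on the fact that the restriction of $\cutp(G_i)$ to $K$ equals $\cutp(K)$ itself. For $|K| \leq 3$ this matching is made possible by the trivial coincidence $\cutp(K) = \metp(K)$ on small cliques, which is exactly why the decomposition method stops working in the presence of a $K_5$ minor. I would carry out the planar base case by citing Barahona's $T$-join argument rather than reproving it, and handle the inductive $k$-sum step by the standard marginal-matching coupling.
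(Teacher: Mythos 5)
The paper does not actually prove this statement: Theorem~\ref{thm::barahona} is imported verbatim from Barahona and Mahjoub \cite{barahona1986cut} and used as a black box in the proof of Theorem~\ref{thm::psos-tight}, so there is no internal argument to compare yours against. What you have written is a reconstruction of the proof from the literature, and its skeleton is the right one: the easy inclusion $\cutp\subseteq\metp$ via the parity of $|C\cap\delta(S)|$; a fractional point of $\metp(K_5)\setminus\cutp(K_5)$ lifted through minor operations for the forward direction (the cleanest witness is the point with all off-diagonal entries $-1/3$, which saturates every triangle inequality but violates $\sum_{e}X_e\ge -2$, the $K_5$-facet); and Wagner's decomposition into clique-sums of planar graphs and $V_8$ for the converse, with the planar case handled by duality and the $T$-join polytope. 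One point where your gluing step should be sharpened: what makes the $k$-sum argument work for $k\le 3$ is not merely that $\cutp(K_k)=\metp(K_k)$, but that $\cutp(K_k)$ is a \emph{simplex} for $k\le 3$, so the marginal of any convex decomposition on the shared clique is the \emph{unique} distribution over cut patterns of $K$; this is what forces the two decompositions to agree on $K$ without any further coupling, and it is exactly the property that fails once a $K_5$ minor is present. As a proof proposal it is a legitimate (if citation-heavy) outline of the known argument, but it is doing work the paper deliberately outsources; for the purposes of this paper the statement needs only the reference.
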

The above result implies that the cut polytope is defined by the metric polytope,
if the underlying graph is not contractible to $K_5$. However, 
cyclic inequalities are not sufficient to describe $K_5$.

\begin{proof}[Proof of Theorem \ref{thm::psos-tight}]
  Define the symmetric matrix $M \in \reals^{n \times n}$ as
  $M_{ij} = M_{ji} = \inner{\s_i, \s_j}$ and $M_{ii} =1$ for $i,j \in [n]$.
  Clearly, $M$ is positive semidefinite.
  Since $\cR = \{R_1, R_2, ..., R_m\}$ is a covering of $G$,
  for each vertex $ i \in V$,
  there exists $j \in [m]$ such that $i \in R_j$.
  We have $\Sigma(R_j)$ satisfying degree-4 SOS
  constraints, which implies that relaxed variable $\s_i$ is
  on the unit sphere. Therefore, the entries of $M$ satisfy
  \eq{
    \abs{M_{ij}} =& \abs{\inner{\s_i, \s_j}} \leq \norm{\s_i} \norm{\s_j}, \\
    \leq& 1,\nonumber
  }
  by the Cauchy-Schwartz inequality.
  Similarly for
  an edge $(i,j)$, there exists $k \in [m]$ such that $i$
  and $j$ both belong to $R_k$. Therefore, the variable $\s_{ij}$ satisfies
  degree-4 SOS constraints, which in turn implies that it is
  on the unit sphere.

  Let $C = \{e_1, e_2, ..., e_N\}$ be a chordless cycle of length $N$
  such that $e_1$ and $e_N$ share a common vertex.
  There exists $p \in [m]$
  such that each node defining the elements of $C$ belongs to the region $R_p$.
  Assume that the nodes $i,j,k \in R_p$. Then,
  \eq{
    M_{ij} = \inner{\s_{i}, \s_j}=\inner{\s_{ij}, \s_0},
  }
  by the undirected constraints. Moreover, by using
  the triangle constraints we can write
  \eq{
    0 \leq& \frac{1}{4}\norm{\s_{ij} + \s_{jk} - \s_{ik} - \s_0}^2, \\
    = & 1 + \frac{1}{2} \inner{\s_{ij}, \s_{jk}}
    - \frac{1}{2} \inner{\s_{ij}, \s_{ik}}\nonumber
    - \frac{1}{2} \inner{\s_{ij}, \s_{0}}
    - \frac{1}{2} \inner{\s_{jk}, \s_{ik}}
    - \frac{1}{2} \inner{\s_{jk}, \s_{0}}
    + \frac{1}{2} \inner{\s_{ik}, \s_{0}},\\ \nonumber
    = & 1 +  \inner{\s_{ik}, \s_0} - \inner{\s_{ij},\s_0} - \inner{\s_{jk}, \s_0}
  }
  and similarly,
  \eq{
    0 \leq& \frac{1}{4}\norm{\s_{ij} + \s_{jk} + \s_{ik} + \s_0}^2, \\
    = & 1 +  \inner{\s_{ik}, \s_0} + \inner{\s_{ij},\s_0} + \inner{\s_{jk}, \s_0}.
    \nonumber
  }
  Using these two inequalities, we can conclude that $\forall i,j,k \in R_p$,
  \eq{\label{eq::triangle-ineq}
    \nonumber
    \abs{\inner{\s_{ij},\s_0} + \inner{\s_{jk}, \s_0}} \leq& 1 + \inner{\s_{ik}, \s_0},\\
    \implies \abs{M_{ij} + M_{jk}} \leq& 1 + M_{ik}.
  }

  Next, we will show that $M$ satisfies
  the cyclic inequalities given in Eq.~\eqref{eq::cyclic-ineq}.
  Recall that $C$ is a chordless cycle $C=\{e_1,e_2, ..., e_N\}$ of $G$,
  and let edges forming $C$ be given as $e_i=(v_{i}, v_{i+1})$ for $i\in [N]$,
  and $v_{N+1} = v_1$.
  Let $F\subset C$ be a set of edges with odd cardinality.
  There is at least one edge belonging $F$.
  We will denote by $e_{i\triangle}$,
  the edge created by joining $v_1$ and $v_i$.
  Note that $e_{2\triangle} = e_1$ and $e_{N\triangle} = e_N$.
  For the simple cycle $C$, by adding the edges
  $\{e_{3\triangle},e_{4\triangle},...,e_{N-1\triangle} \}$
  we have created $N-3$ chords to construct the chordal graph of $C$, where
  $e_i$, $e_{i\triangle}$ and $e_{i+1\triangle}$ form a triangle.

  Let $s_{j} \in \{ -1, +1\}$ be the indicator variable
  for $e_j$'s membership to the set $F$ ($s_{j} = 1$ if $e_j \in F$).
  We have $\prod_{i=1}^Ns_i = (-1)^{N - |F|}$ which
  implies that $s_N = (-1)^{N - |F|}\prod_{i=1}^{N-1}s_i $.
  Finally, we let $s_{i\triangle} = \prod_{j=1}^{i-1}s_j$ for $i\geq 2$
  and observe that $s_{i+1\triangle} = s_{i\triangle}s_{i+1}$. Noticing that
  \eq{
    M(F) - M(C\setminus F) = \sum_{i=1}^N s_i M_{e_i},
  }
  we write the following inequalities
  that are based on the triangle inequalities given in Eq.~\eqref{eq::triangle-ineq},
  \eq{
    &s_{1}M_{e_1} + s_{2}M_{e_2} + s_{3\triangle}M_{e_{3\triangle}} + 1 \geq 0,\\
    \nonumber
    &s_{3}M_{e_3} - s_{3\triangle}M_{e_{3\triangle}} - s_{4\triangle}M_{e_{4\triangle}}  + 1 \geq 0,\\
    \nonumber
    &s_{4}M_{e_4} + s_{4\triangle}M_{e_{4\triangle}} + s_{5\triangle}M_{e_{5\triangle}}  + 1 \geq 0,\\
    \nonumber
    &\hspace{1in}\vdots\hspace{1in} \vdots\\
    \nonumber
    &s_{N-1}M_{e_{N-1}} + (-1)^{N-1}s_{N-1\triangle}M_{e_{N-1\triangle}} + (-1)^{N-1}s_{N\triangle}M_{e_{N\triangle}}  + 1 \geq 0
  }

  By summing these inequalities, we obtain that
  \eq{\label{eq::sum-over}
    \sum_{i=1}^{N-1}s_iM_{e_i} + (-1)^{N-1}s_{N\triangle}M_{e_{N\triangle}} + N -2 \geq 0.
  }
  Since we also have
  $
  s_{N\triangle} = \prod_{i=1}^{N-1}s_i = s_N (-1)^{N-|F|}
  $
  we can write
  \eq{
    (-1)^{N-1}s_{N \triangle} = s_N (-1)^{2N-|F|-1} = s_N
  }
  since $|F|$ is odd. Therefore the inequality in Eq.~\eqref{eq::sum-over}
  reduces to
  \eq{
    \sum_{e \in F}M_e - \sum_{e \in C\setminus F}M_e \geq 2 - N.
  }
  This implies that $M \in \metp$.
  Finally, we invoke the result given in Theorem \ref{thm::barahona}
  and conclude the proof.

\end{proof}

\newpage
\section{Additional Experiments}

\begin{figure}[H]
  \centering
  \includegraphics[width=\linewidth]{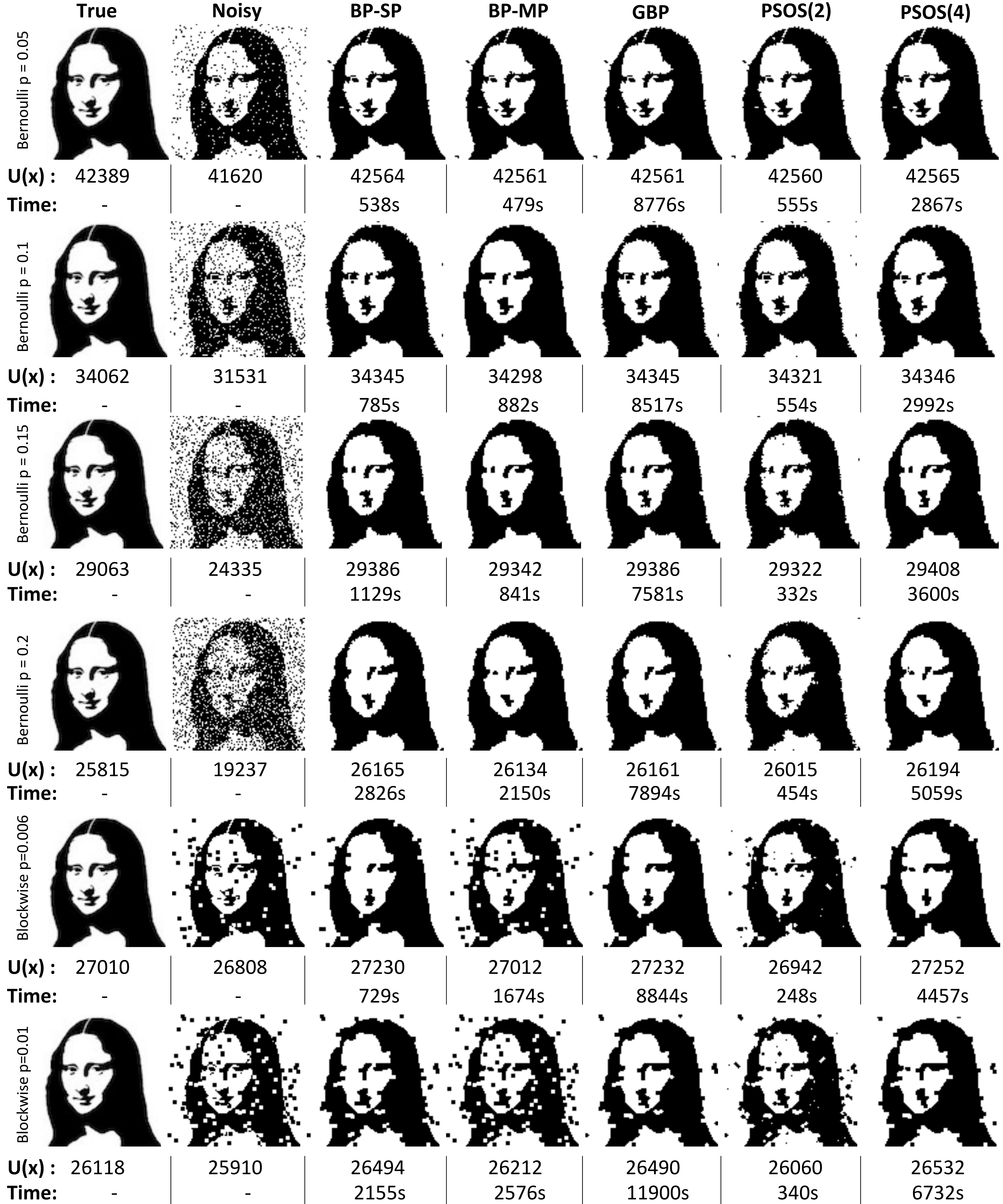}
  \caption{
    Additional denoising experiments of a binary image by maximizing
    the objective function Eq.~(\ref{eq:ObjDenoise}). 
    First 4 rows: i.i.d. Bernoulli error with flip probability
    $p \in \{0.05, 0.1, 0.15, 0.2\}$ with $\theta_0=1.26$.
    Last 2 rows: blockwise noise where each pixel is the center of a
    $3\times 3$ error block independently with
    probability $p \in \{0.006, 0.01\}$ and $\theta_0=1$.
    Final objective value attained by each algorithm along with its run time is
    reported under each image. We observe that PSOS(4) achieves the best objective
    value compared to the other inference algorithms.
    \label{fig::monalisa-many} 
  }
\end{figure}

\newpage

\section{Further Details of the
  Experiments in Section~\ref{sec:SpinGlass}}

\renewcommand{\arraystretch}{1.2}
\begin{table*}[h]\scriptsize
  \caption{
    \label{tab::results} Details of the experiments shown in Figure~\ref{fig::box-plot}.
    We report statistics of run-time and ratio to the best algorithm.
    More specifically, we report the mean and standard deviation of
    the run-time of each algorithm within each experiment (100 replications).
    We also report $\%5/\%10/\%60$ quantiles of the ratio of the objective value achieved by an algorithm
    and the exact optimum for $n\in\{16,25\}$,
    or the best value achieved by any of the 5 algorithms
    for $n\in \{100,400, 900\}$.
  }

  \begin{center}
    \begin{sc}
      \begin{tabular}{l|c|c|c|c|c|c|c|c|c|c}
        \hline
        \hline
        Experiment$\downarrow$\!\!\!& Stats$\downarrow$\!\!\!&PSOS-4 & PSOS-2& GBP & BP-MP& BP-SP\\
        \hline
        \begin{tabular}{@{}c@{}c@{}}$n\!=\!16$ \\ $\theta^v_i\sim\mathcal{U}(\pm 1)$\\$\theta^e_{ij}\sim\mathcal{U}(\pm 1)$\end{tabular}
                                    &\begin{tabular}{@{}c@{}}Time(mean/sd): \\ Ratio(5/10/60\% qt) \end{tabular}
                                    &\begin{tabular}{@{}c@{}} $3.9/.3$\\ $1./1./1.$\end{tabular}
                                    &\begin{tabular}{@{}c@{}} $.2/.0$ \\ $.82/.83/1.$\end{tabular}
                                    &\begin{tabular}{@{}c@{}} $2.8/1.6$ \\ $.91/.92/1.$\end{tabular}
                                    &\begin{tabular}{@{}c@{}} $1./.8$ \\ $.71/.82/1.$\end{tabular}
                                    &\begin{tabular}{@{}c@{}} $1./.4$ \\ $.91/.91/1.$\end{tabular}
        \\
        \hline
        \begin{tabular}{@{}c@{}c@{}}$n\!=\!16$ \\ $\theta^v_i\sim\mathcal{U}(\pm .5)$\\$\theta^e_{ij}\sim\mathcal{U}(\pm 1)$\end{tabular}
                                    &\begin{tabular}{@{}c@{}}Time(mean/sd): \\ Ratio(5/10/60\% qt) \end{tabular}
                                    &\begin{tabular}{@{}c@{}} $4./.3$\\ $1./1./1.$\end{tabular}
                                    &\begin{tabular}{@{}c@{}} $.2/.0$ \\ $.71/ .79/1.$\end{tabular}
                                    &\begin{tabular}{@{}c@{}} $3.7/1.7$ \\ $.78/.83/1.$\end{tabular}
                                    &\begin{tabular}{@{}c@{}} $1.8/.9$ \\ $.57/.65/1.$\end{tabular}
                                    &\begin{tabular}{@{}c@{}} $1.7/.5$ \\ $.65/.74/1.$\end{tabular}
        \\
        \hline
        \begin{tabular}{@{}c@{}c@{}}$n\!=\!16$ \\ $\theta^v_i\sim\normal(0, .01)$\\$\theta^e_{ij}\sim\normal(0,1)$\end{tabular}
                                    &\begin{tabular}{@{}c@{}}Time(mean/sd): \\ Ratio(5/10/60\% qt) \end{tabular}
                                    &\begin{tabular}{@{}c@{}} $4.4/.5$\\ $1./1./1.$\end{tabular}
                                    &\begin{tabular}{@{}c@{}} $.2/.0$ \\  $.72/.83/1.$\end{tabular}
                                    &\begin{tabular}{@{}c@{}} $3.8/2.$ \\ $.67/.76/.97$\end{tabular}
                                    &\begin{tabular}{@{}c@{}} $2.1/.7$ \\ $.41/.5/.98$\end{tabular}
                                    &\begin{tabular}{@{}c@{}} $1.4/.4$ \\ $.52/.69/.95$\end{tabular}
        \\
        \hline
        \begin{tabular}{@{}c@{}c@{}}$n\!=\!16$ \\ $\theta^v_i\sim\normal(0,1)$\\$\theta^e_{ij}\sim\normal(0,1)$\end{tabular}
                                    &\begin{tabular}{@{}c@{}}Time(mean/sd): \\ Ratio(5/10/60\% qt) \end{tabular}
                                    &\begin{tabular}{@{}c@{}} $4./.3$\\   $1./1./1.$\end{tabular}
                                    &\begin{tabular}{@{}c@{}} $.2/.0$ \\  $.78/.87/1.$\end{tabular}
                                    &\begin{tabular}{@{}c@{}} $2.5/1.4$ \\$.8/.86/1.$\end{tabular}
                                    &\begin{tabular}{@{}c@{}} $.9/.33$ \\ $.83/.96/1.$\end{tabular}
                                    &\begin{tabular}{@{}c@{}} $.81/.3$ \\ $.83/.89/1.$\end{tabular}
        \\
        \hline
        \begin{tabular}{@{}c@{}c@{}}$n\!=\!25$ \\ $\theta^v_i\sim\mathcal{U}(\pm 1)$\\$\theta^e_{ij}\sim\mathcal{U}(\pm 1)$\end{tabular}
                                    &\begin{tabular}{@{}c@{}}Time(mean/sd): \\ Ratio(5/10/60\% qt) \end{tabular}
                                    &\begin{tabular}{@{}c@{}} $7.1/.8$\\  $1./1./1.$\end{tabular}
                                    &\begin{tabular}{@{}c@{}} $.3/.0$ \\  $.84/.87/1.$\end{tabular}
                                    &\begin{tabular}{@{}c@{}} $9./3.2$ \\ $.9/.95/1.$\end{tabular}
                                    &\begin{tabular}{@{}c@{}} $2.4/1.6$ \\$.73/.84/1.$\end{tabular}
                                    &\begin{tabular}{@{}c@{}} $1.9/.8$ \\ $.9/.95/1.$\end{tabular}
        \\
        \hline
        \begin{tabular}{@{}c@{}c@{}}$n\!=\!100$ \\ $\theta^v_i\sim\mathcal{U}(\pm 1)$\\$\theta^e_{ij}\sim\mathcal{U}(\pm 1)$\end{tabular}
                                    &\begin{tabular}{@{}c@{}}Time(mean/sd): \\ Ratio(5/10/60\% qt) \end{tabular}
                                    &\begin{tabular}{@{}c@{}} $58./10.$\\ $1./1./1.$\end{tabular}
                                    &\begin{tabular}{@{}c@{}} $1.3/.1$ \\   $.87/ .89/.94$\end{tabular}
                                    &\begin{tabular}{@{}c@{}} $77.7/.4$ \\  $.92/.94/.99$\end{tabular}
                                    &\begin{tabular}{@{}c@{}} $17.7/3.9$ \\ $.85/.87/.96$\end{tabular}
                                    &\begin{tabular}{@{}c@{}} $14.4/2.4$ \\ $.93/.94/.99$\end{tabular}
        \\
        \hline
        \begin{tabular}{@{}c@{}c@{}}$n\!=\!400$ \\ $\theta^v_i\sim\mathcal{U}(\pm 1)$\\$\theta^e_{ij}\sim\mathcal{U}(\pm 1)$\end{tabular}
                                    &\begin{tabular}{@{}c@{}}Time(mean/sd): \\ Ratio(5/10/60\% qt) \end{tabular}
                                    &\begin{tabular}{@{}c@{}} $360.3/83.4$\\ $1./1./1.$\end{tabular}
                                    &\begin{tabular}{@{}c@{}} $5.7/.4$ \\   $.89/.9/.93$\end{tabular}
                                    &\begin{tabular}{@{}c@{}} $386.8/7.$ \\ $.93/.94/.97$\end{tabular}
                                    &\begin{tabular}{@{}c@{}} $83.3/.7$ \\  $.9/.91/.95$\end{tabular}
                                    &\begin{tabular}{@{}c@{}} $69.4/.5$ \\  $.95/.96/.98$\end{tabular}
        \\
        \hline
        \begin{tabular}{@{}c@{}c@{}}$n\!=\!900$ \\ $\theta^v_i\sim\mathcal{U}(\pm 1)$\\$\theta^e_{ij}\sim\mathcal{U}(\pm 1)$\end{tabular}
                                    &\begin{tabular}{@{}c@{}}Time(mean/sd): \\ Ratio(5/10/60\% qt) \end{tabular}
                                    &\begin{tabular}{@{}c@{}} $757.4/108.$\\ $1./1./1.$\end{tabular}
                                    &\begin{tabular}{@{}c@{}} $13.9/1.1$ \\  $.9/ .91/.93$\end{tabular}
                                    &\begin{tabular}{@{}c@{}} $939.8/31.4$ \\$.94/.95/.97$\end{tabular}
                                    &\begin{tabular}{@{}c@{}} $194./1.4$ \\  $.91/.92/.95$\end{tabular}
                                    &\begin{tabular}{@{}c@{}} $161.8/1.3$ \\ $.95/.96/.97$\end{tabular}
                                    \\
        \hline
      \end{tabular}
    \end{sc}
  \end{center}
  \vskip -0.1in
\end{table*}

\end{document}